\newtheorem{theorem}{Theorem}
\newtheorem{lemma}[theorem]{Lemma}
\newtheorem{corollary}[theorem]{Corollary}
\newtheorem{proposition}[theorem]{Proposition}
\theoremstyle{remark}
\newtheorem{remark}[theorem]{Remark}
\theoremstyle{definition}
\newtheorem{definition}[theorem]{Definition}
\newcommand{\oo}{\infty}
\newcommand{\R}{\mathbb{R}}
\newcommand{\Z}{\mathbb{Z}}
\newcommand{\PHI}{\varphi}
\newcommand{\DGM}[1]{(\text{Dgm}_{#1}, w_{#1})}
\newcommand{\HP}{\mathbb{R}_<^2}
\DeclareMathOperator{\ID}{id}
\DeclareMathOperator{\C}{cost}
\begin{document}

\title{Embeddings of Persistence Diagrams into Hilbert Spaces}

\author{Peter Bubenik}
\address{Department of Mathematics, University of Florida}
\email{peter.bubenik@ufl.edu}
\urladdr{https://people.clas.ufl.edu/peterbubenik/}

\author{Alexander Wagner}
\address{Department of Mathematics, University of Florida}
\email{wagnera@ufl.edu}
\urladdr{https://people.clas.ufl.edu/wagnera/}

\maketitle

\begin{abstract}
Since persistence diagrams do not admit an inner product structure, a map into a Hilbert space is needed in order to use kernel methods. It is natural to ask if such maps necessarily distort the metric on persistence diagrams. We show that persistence diagrams with the bottleneck distance do not even admit a coarse embedding into a Hilbert space. As part of our proof, we show that any separable, bounded metric space isometrically embeds into the space of persistence diagrams with the bottleneck distance. As corollaries, we obtain the generalized roundness, negative type, and asymptotic dimension of this space.
\end{abstract}

\section{Introduction}
\label{sec:intro}

Kernel methods, such as support vector machines or principal components analysis, are machine learning algorithms that require an inner product on the data \citep{MR2450103}. When the original data set $X$ lacks an inner product or one would like a higher-dimensional representation of the data, a standard approach is to map the data into a Hilbert space $\mathcal{H}$. Such a mapping is called a feature map and kernel methods are implicitly performed in the codomain of the feature map. While specifying an explicit feature map may be difficult, it turns out to be equivalent to the often simpler task of constructing a positive definite kernel on the data. This equivalence is important for the practical success of kernel methods but should not obscure the fact that there is an underlying feature map $\PHI: X \to \mathcal{H}$ and that the associated learning algorithm works with $\PHI(X) \subseteq \mathcal{H}$. Because of this, when $X$ represents stable signatures of input data, one would like a feature map $\PHI$ that changes the original metric as little as possible. 

Persistent homology takes in a one-parameter family of topological spaces and outputs a signature, called the persistence diagram, of this family's changing homology. There is a natural metric on one-parameter families of topological spaces, called the interleaving distance, and a family of metrics on persistence diagrams, called the $p$-Wasserstein distances. When $p = \oo$,  changes in the input of persistent homology cause at most proportional changes in the output \citep{MR2279866, Chazal:2009:PPM:1542362.1542407}. This stability supports the use of persistent homology for machine learning because it guarantees that small perturbations of the data, such as those caused by measurement noise, do not cause large changes in the associated features. If one would like to apply kernel methods to persistence diagrams, a natural first question is whether the metrics on persistence diagrams can be induced by an inner product. More precisely, does there exist an isometric embedding of persistence diagrams into a Hilbert space? We show in Section \ref{subsec:isoembed} that the impossibility of such an isometric embedding follows from work of \citet{2019arXiv190301051T} and classical results of \citet{MR1503248, MR1501980}. In other words, any feature map from persistence diagrams into a Hilbert space necessarily distorts the original metric. 
 
Our main results consider the $\oo$-Wasserstein distance, also called the bottleneck distance. Among the $p$-Wasserstein distances on persistence diagrams, this is the only case for which persistent homology is $1$-Lipschitz. Isometric embeddings require distances to be exactly preserved. More general are bi-Lipschitz embeddings which are allowed to distort distances at most linearly. Considerably more general are coarse embeddings, which need not be continuous and only require that distances be distorted in a uniform, but potentially non-linear, way. Coarse embeddings are an important notion in geometric group theory and coarse geometry \citep{gromov1993asymptotic, 	roe2003lectures}. We show that the space of persistence diagrams with the bottleneck distance does not admit a coarse embedding into any Hilbert space (Theorem \ref{thm:embeddingdb}). In other words, the distortion caused by a feature map to the bottleneck distance is not uniformly controllable. In fact, even if one restricts to the subspace of (finite) persistence diagrams arising as the homology of a filtered finite simplicial complex, there still does not exist a coarse embedding of this subspace into a Hilbert space (Remark~\ref{rem:finite} and Lemma~\ref{lem:finite}). This result about distortions of embeddings is something that people working with persistence diagrams have noticed in practice. Philosophically, this is to be expected because bottleneck distance is an $\ell^{\oo}$-type distance, and $\ell^{\oo}$ can only be embedded in $\ell^2$ with distortion growing with dimension. Our paper makes such an argument rigorous. As corollaries of Theorem \ref{thm:embeddingdb}, we obtain the generalized roundness, negative type, and asymptotic dimension of persistence diagrams with the bottleneck distance (Corollary~\ref{cor:grbottleneck}, Remark~\ref{rem:negative-type} and Corollary~\ref{cor:asymptotic-dimension}). Toward our proof of Theorem \ref{thm:embeddingdb}, we show that any separable, bounded metric space isometrically embeds into the space of persistence diagrams with the bottleneck distance (Theorem \ref{thm:finitems}). Our proof of Theorem \ref{thm:embeddingdb} combines Theorem \ref{thm:finitems} with ideas of \citet{dranishnikov2002uniform} and \citet{enflo1970problem}.
  
\subsection{Related work}
\label{subsec:relatedwork}

\citet{MR3968607} have investigated bi-Lipschitz embeddings of persistence diagrams into separable Hilbert spaces. They've shown the impossibility of a bi-Lipschitz embedding into a finite-dimensional Hilbert space and that bi-Lipschitz embeddings into infinite-dimensional, separable Hilbert spaces only exist when restrictions are placed on the cardinality and spread of the persistence diagrams under consideration. \citet{2019arXiv190202288B} have shown that the space of persistence diagrams with the $p$-Wasserstein distance for $p < \oo$ has a discrete subspace that fails to have property A. The relevance of this result is that a discrete metric space with property A admits a coarse embedding into a Hilbert space \citep[Theorem 2.2]{MR1728880}. \citet[Theorem 4.37]{MR3927353} have shown that there exist cubes of arbitrary dimension with the $\ell^\oo$ distance which isometrically embed into the space of persistence diagrams with the bottleneck distance.

\section{Background}
\label{sec:background}

\subsection{The space of persistence diagrams}
\label{subsec:spaceofpd}
In this section, we define persistence diagrams and a family of associated metric spaces. Persistence diagrams naturally arise as the output of persistent homology, which describes the changing homology of a one-parameter family of topological spaces. Persistence diagrams are usually defined to be multisets. We find it convenient to instead define them as indexed sets.

\begin{definition}
Denote $\{(x,y) \in \R^2\ |\ x < y\}$ by $\HP$. A \emph{persistence diagram} is a function from a countable set $I$ to $\HP$, i.e. $D: I \to \HP$.
\end{definition}

To define the relevant metrics on persistence diagrams, we need two preliminary definitions.

\begin{definition}
Suppose $D_1: I_1 \to \HP$ and $D_2: I_2 \to \HP$ are persistence diagrams. A \emph{partial matching} between them is a triple $(I_1', I_2', f)$ such that $I_1' \subseteq I_1$, $I_2' \subseteq I_2$, and $f: I_1' \to I_2'$ is a bijection.
\end{definition}

The $p$-Wasserstein distance between two persistence diagrams will be the minimal cost of a partial matching between them. More precisely, the cost of a partial matching is the $\ell^p$ norm of the distances between matched pairs and the distances between unmatched pairs and $\Delta$, the diagonal in $\R^2$. 

\begin{definition}
Suppose $D_1: I_1 \to \HP$ and $D_2: I_2 \to \HP$ are persistence diagrams and $(I_1', I_2', f)$ is a partial matching between them. Equip $\HP$ with the metric $d_\oo(a,b) = \| a - b \|_\oo = \max(|a_x - b_x|, |a_y - b_y|)$. For $a \in \HP$, observe that $d_\oo(a, \Delta) = \inf_{t \in \Delta} d_\oo(a, t) = (a_y - a_x)/2$. The \emph{$p$-cost of $f$} is denoted $\C_p(f)$ and defined as follows. 
If $p < \oo$,
\[
\C_p(f) = \left( \sum_{i \in I_1'}d_\oo (D_1(i), D_2(f(i)))^p + \sum_{i \in I_1 \setminus I_1'}d_\oo (D_1(i), \Delta)^p + \sum_{i \in I_2 \setminus I_2'} d_\oo (D_2(i), \Delta)^p\right)^{1/p}.
 \]
 If $p = \oo$,
 \[
 \C_{\oo}(f) = \max \{ \sup_{i \in I_1'} d_\oo (D_1(i), D_2(f(i)), \sup_{i \in I_1 \setminus I_1'} d_\oo(D_1(i), \Delta), \sup_{i \in I_2 \setminus I_2'} d_\oo(D_2(i), \Delta)\}.
 \]
If any of the terms in either expression are unbounded, we define the cost to be infinity.
\end{definition}

\begin{definition}[\citealt{MR2279866, MR2594441}]
Let $1\leq p \leq \oo$. If $D_1$, $D_2$ are persistence diagrams, define 
\[
\tilde{w}_p(D_1, D_2) = \inf \{ \C_p(f)\ |\ \text{f is a partial matching between $D_1$ and $D_2$} \}.
\]
Let $\DGM{p}$ denote the metric space of persistence diagrams $D$ that satisfy $\tilde{w}_p(D,\emptyset)< \oo$ modulo the relation $D_1 \sim D_2$ if $\tilde{w}_p(D_1, D_2) = 0$, where $\emptyset$ is shorthand for the unique persistence diagram with empty indexing set. The metric $w_p$ is called the \emph{$p$-Wasserstein distance} and $w_{\oo}$ is called the \emph{bottleneck distance}.
\end{definition}

Note that the empty partial matching is the only one between $D: I \to \HP$ and $\emptyset$. Hence, $\tilde{w}_p(D, \emptyset) = (\sum_{i \in I} d_\oo (D(i), \Delta)^p)^{1/p}$.

\subsection{Negative type and kernels} 
\label{subsec:negtype}
The following definition and theorem equate the problem of defining a feature map on a set to the frequently simpler problem of defining a positive definite kernel. Theorem \ref{thm:kernelfeaturemap} and the fact that kernel methods require access to only the inner products of elements is the content of the so-called kernel trick.

\begin{definition}
Let $X$ be a nonempty set. A symmetric function $k: X \times X \to \R$ is a \emph{positive definite kernel} if for any $n \in \mathbb{N}$, $c_1, \dots, c_n \in \R$, and $x_1, \dots, x_n \in X$,
\[
\sum_{i,j = 1}^n c_ic_jk(x_i,x_j) \geq 0.
\]
\end{definition}

\begin{theorem}[{\citealt[Theorem 4.16]{MR2450103}}]
\label{thm:kernelfeaturemap}
	Let $X$ be a nonempty set. A function $k: X \times X \to \R$ is a positive definite kernel iff there exists a Hilbert space $\mathcal{H}$ and a feature map $\PHI: X \to \mathcal{H}$ such that $\langle \PHI(x), \PHI(y) \rangle = k(x,y)$ for every $x,y \in X$.
\end{theorem}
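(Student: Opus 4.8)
The plan is to prove the two implications separately; the forward direction is a short computation, while the converse requires building a Hilbert space out of the kernel.

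For the forward direction, suppose a Hilbert space $\mathcal{H}$ and feature map $\PHI \colon X \to \mathcal{H}$ exist with $\langle \PHI(x), \PHI(y) \rangle = k(x,y)$. Symmetry of $k$ is immediate from symmetry of the inner product. Given $c_1, \dots, c_n \in \R$ and $x_1, \dots, x_n \in X$, bilinearity of the inner product gives
\[
\sum_{i,j=1}^n c_i c_j k(x_i, x_j) = \sum_{i,j=1}^n c_i c_j \langle \PHI(x_i), \PHI(x_j) \rangle = \Big\langle \sum_{i=1}^n c_i \PHI(x_i), \sum_{j=1}^n c_j \PHI(x_j) \Big\rangle = \Big\| \sum_{i=1}^n c_i \PHI(x_i) \Big\|^2 \geq 0,
\]
so $k$ is positive definite.

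For the converse, I would construct the reproducing kernel Hilbert space associated to $k$. Let $H_0$ be the linear span inside $\R^X$ of the functions $k_x := k(\cdot, x)$ for $x \in X$, so that a typical element is $f = \sum_i c_i k_{x_i}$. On $H_0$ I would define a symmetric bilinear form by
\[
\Big\langle \sum_i c_i k_{x_i}, \sum_j d_j k_{y_j} \Big\rangle := \sum_{i,j} c_i d_j k(x_i, y_j).
\]
The first task is to check that this is well defined, i.e.\ independent of the chosen representations. The key observation is the \emph{reproducing property}: for $f = \sum_i c_i k_{x_i}$ and any $y \in X$, the definition gives $\langle f, k_y \rangle = \sum_i c_i k(x_i, y) = f(y)$. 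Hence $\langle f, \sum_j d_j k_{y_j}\rangle = \sum_j d_j f(y_j)$, an expression that depends on $f$ only through its values as a function; by symmetry the form likewise depends on the second argument only through its values, which establishes well-definedness.

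Positive definiteness of $k$ guarantees $\langle f, f \rangle \geq 0$, so $\langle \cdot, \cdot \rangle$ is a positive semidefinite symmetric bilinear form and therefore satisfies the Cauchy--Schwarz inequality. Combining Cauchy--Schwarz with the reproducing property yields $|f(x)|^2 = |\langle f, k_x \rangle|^2 \leq \langle f, f \rangle\, k(x,x)$, so $\langle f, f \rangle = 0$ forces $f \equiv 0$; thus $\langle \cdot, \cdot \rangle$ is a genuine inner product on $H_0$. I would then let $\mathcal{H}$ be the completion of $H_0$ and set $\PHI(x) = k_x$, so that $\langle \PHI(x), \PHI(y) \rangle = k(x,y)$ as required. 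The main obstacle is the well-definedness of the inner product together with the upgrade from semidefinite to definite; once the reproducing property is in hand both are routine, and the completion step presents no real difficulty.
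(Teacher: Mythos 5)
This statement is background quoted from Steinwart and Christmann (their Theorem 4.16); the paper itself gives no proof, and your argument is essentially the standard one found in that reference. Your proof is correct: the forward direction is the usual norm-squared computation, and the converse is the Moore--Aronszajn construction of the reproducing kernel Hilbert space, in which you correctly isolate and handle the two genuine issues --- well-definedness of the bilinear form via the reproducing property, and the upgrade from semidefinite to definite via Cauchy--Schwarz applied to $|f(x)|^2 = |\langle f, k_x\rangle|^2 \leq \langle f,f\rangle\, k(x,x)$ --- before passing to the completion.
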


We now turn to the definition of negative type, which is closely related to positive definite kernels and to the embeddability of metric spaces into Hilbert spaces. Negative type played a central role in work of \citet{MR1503248, 	MR1501980} characterizing semi-metric spaces that admit an isometric embedding into a Hilbert space; see Theorems~\ref{thm:schoenberg} and \ref{thm:negtypeiso}. \citet{enflo1970problem} also implicitly used negative type to answer negatively the question of Smirnov on whether every separable metric space is uniformly homeomorphic to a subset of $L_2[0, 1]$. The equivalence between Enflo's notion of generalized roundness and the older notion of negative type was not proven until much later by \cite{lennard1997generalized}, giving a geometric characterization to the notion of negative type and, in particular, to the existence of isometric embeddings into Hilbert spaces. We refer the reader to \citet{MR747302} and \citet{wells2012embeddings} for a more thorough treatment of the results referenced here. We remark that what we call a semi-metric space in the following definition is called a quasi-metric space in \citet{wells2012embeddings}.

\begin{definition}
A \emph{semi-metric space} is a nonempty set $X$ together with a function $d: X \times X \to [0,\oo)$ such that $d(x,x) = 0$ and $d(x,y) = d(y,x)$ for every $x,y \in X$.
\end{definition}

\begin{definition}
	Let $q \geq 0$. A semi-metric space $(X,d)$ is said to be of \emph{$q$-negative type} if for any $n \in \mathbb{N}$,  $x_1, \dots, x_n \in X$, and $a_1, \dots, a_n \in \R$ satisfying $\sum_{i=1}^{n} a_i = 0$, the following inequality is satisfied.
	\[
	\sum_{i,j =1}^{n} a_ia_jd(x_i,x_j)^q \leq 0
	\]
	We define the negative type of a semi-metric space $(X, d)$ to be the supremum of the set of $q \in [0, \oo)$ such that $(X,d)$ is of $q$-negative type.
\end{definition}

A relationship between positive definite kernels and negative type is given in the following.

\begin{theorem}[{\citealt[Lemma 2.1, Theorem 2.2]{MR747302}}]
\label{thm:schoenberg}
Let $(X,d)$ be a semi-metric space. The following are equivalent.
\begin{enumerate}
	\item $(X,d)$ is of $1$-negative type.
	\item For any $x_0 \in X$, $k(x,y) = d(x,x_0) + d(y,x_0) - d(x,y)$ is a positive definite kernel.
	\item $k(x,y) = e^{-td(x,y)}$ is a positive definite kernel for every $t > 0$.
\end{enumerate}	
\end{theorem}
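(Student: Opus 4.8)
The plan is to prove the cycle of implications $(1) \Rightarrow (2) \Rightarrow (3) \Rightarrow (1)$, invoking Theorem~\ref{thm:kernelfeaturemap} at the single genuinely analytic step. For $(1) \Rightarrow (2)$, fix $x_0 \in X$ and let $x_1, \dots, x_n \in X$ and $c_1, \dots, c_n \in \R$ be arbitrary. The idea is to augment this configuration by the basepoint: set $a_i = c_i$ for $1 \le i \le n$ and $a_0 = -\sum_{i=1}^n c_i$, so that $\sum_{i=0}^n a_i = 0$ and the $1$-negative type inequality applies to $x_0, x_1, \dots, x_n$. Expanding $\sum_{i,j=0}^n a_i a_j d(x_i, x_j) \le 0$ using $d(x_0,x_0)=0$, and comparing it term by term with $\sum_{i,j=1}^n c_i c_j k(x_i,x_j)$---where the mixed sums collapse because $\sum_j c_j\, d(x_j,x_0)$ factors out---yields $\sum_{i,j=1}^n c_i c_j k(x_i, x_j) \ge 0$. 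This is purely algebraic bookkeeping, and the same computation read backwards (now using $\sum a_i = 0$ to kill the two distance-to-$x_0$ terms) gives $(2) \Rightarrow (1)$, so these two conditions are in fact already seen to be equivalent.

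For $(2) \Rightarrow (3)$, fix any basepoint $x_0$ and apply Theorem~\ref{thm:kernelfeaturemap} to the positive definite kernel $k(x,y) = d(x,x_0)+d(y,x_0)-d(x,y)$ to obtain a Hilbert space $\mathcal{H}$ and a feature map $\PHI : X \to \mathcal{H}$ with $\langle \PHI(x), \PHI(y)\rangle = k(x,y)$. Computing $\|\PHI(x)-\PHI(y)\|^2 = k(x,x)+k(y,y)-2k(x,y)$ and using $k(x,x) = 2\,d(x,x_0)$ collapses everything to $\|\PHI(x)-\PHI(y)\|^2 = 2\,d(x,y)$. Thus $e^{-t\, d(x,y)} = e^{-\frac{t}{2}\|\PHI(x)-\PHI(y)\|^2}$ is a Gaussian kernel pulled back along $\PHI$, so it suffices to know that $u,v \mapsto e^{-s\|u-v\|^2}$ is positive definite on a Hilbert space for every $s>0$.

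I expect this last fact to be the main obstacle, since it is the one place a new tool must enter; note in particular that it must be established independently of Schoenberg's theorem to avoid circularity. I would prove it by writing $e^{-s\|u-v\|^2} = e^{-s\|u\|^2}\,e^{-s\|v\|^2}\,e^{2s\langle u,v\rangle}$. The kernel $\langle u,v\rangle$ is positive definite because $\sum_{i,j}c_ic_j\langle u_i,u_j\rangle = \|\sum_i c_i u_i\|^2 \ge 0$; its integer powers are positive definite by the Schur product theorem; hence the power series $e^{2s\langle u,v\rangle} = \sum_{m\ge 0} (2s)^m\langle u,v\rangle^m/m!$ is a limit of positive definite kernels and so positive definite; and finally multiplying by the rank-one factor $g(u)g(v)$ with $g(u)=e^{-s\|u\|^2}$ preserves positive definiteness, since $\sum_{i,j} c_i c_j\, g(u_i) g(u_j)\, \kappa(u_i,u_j) = \sum_{i,j} (c_i g(u_i))(c_j g(u_j))\, \kappa(u_i,u_j) \ge 0$.

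For $(3) \Rightarrow (1)$, take $x_1,\dots,x_n$ and $a_1,\dots,a_n$ with $\sum_i a_i = 0$ and consider $F(t) = \sum_{i,j} a_i a_j\, e^{-t\, d(x_i,x_j)}$. Hypothesis $(3)$ gives $F(t)\ge 0$ for all $t>0$, while $F(0) = (\sum_i a_i)^2 = 0$. Hence the one-sided derivative satisfies $F'(0^+) = \lim_{t\to 0^+}(F(t)-F(0))/t \ge 0$; since $F'(0) = -\sum_{i,j} a_i a_j\, d(x_i,x_j)$, this is exactly the $1$-negative type inequality $\sum_{i,j} a_i a_j\, d(x_i,x_j)\le 0$, closing the cycle.
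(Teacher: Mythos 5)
Your proof is correct. A preliminary remark: the paper does not prove this statement at all---it is quoted from \citet[Lemma 2.1, Theorem 2.2]{MR747302}---so there is no internal proof to compare against, and what you have produced is a self-contained proof of a classical result. Your argument is essentially the standard one, with one pleasant variation: for $(2)\Rightarrow(3)$, rather than manipulating the kernel $k$ purely algebraically (as Berg--Christensen--Ressel do, factoring $e^{-t d(x,y)}$ into $e^{t k(x,y)}$ times one-variable factors), you invoke Theorem~\ref{thm:kernelfeaturemap} to realize the identity $\|\PHI(x)-\PHI(y)\|^2 = 2d(x,y)$ and then reduce to positive definiteness of the Gaussian kernel on a Hilbert space. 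The two routes have the same mathematical content---both rest on the Schur product theorem, the power series for the exponential, closure of positive definiteness under pointwise limits, and stability under multiplication by rank-one factors $g(u)g(v)$, all of which you handle correctly and without circular reliance on Schoenberg's theorem---but yours makes the geometric picture explicit and fits the paper's feature-map viewpoint. The remaining steps are also sound: the basepoint-augmentation trick for $(1)\Leftrightarrow(2)$ uses only $d(x_0,x_0)=0$ and $\sum_i a_i=0$ (the paper's definition of negative type imposes no distinctness of points, and no triangle inequality is ever needed, so everything goes through for semi-metric spaces as required), and the derivative argument for $(3)\Rightarrow(1)$, namely $F(t)\ge 0=F(0)$ for $t>0$ forcing $F'(0)=-\sum_{i,j}a_ia_j\,d(x_i,x_j)\ge 0$, is the standard way to recover negative type from the exponential condition.
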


The negative type of a semi-metric space is closely related to questions regarding its embeddability into Hilbert spaces. An isometric embedding of a semi-metric space $(X, d)$ into a Hilbert space $\mathcal{H}$ is a map $\PHI: X \to \mathcal{H}$ satisfying $d(x,y) = \| \PHI(x) - \PHI(y) \|_{\mathcal{H}}$ for every $x, y \in X$.

\begin{theorem}[{\citealt[Theorem 2.4, Remark 3.2]{wells2012embeddings}}]
\label{thm:negtypeiso}
A semi-metric space admits an isometric embedding into a Hilbert space iff it is of $2$-negative type.
\end{theorem}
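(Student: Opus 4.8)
The plan is to prove both implications directly, leaning on the already-established Theorems~\ref{thm:schoenberg} and \ref{thm:kernelfeaturemap}.

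For the forward direction, I would suppose $\PHI : X \to \mathcal{H}$ is an isometric embedding, so that $d(x,y)^2 = \|\PHI(x) - \PHI(y)\|^2 = \|\PHI(x)\|^2 - 2\langle \PHI(x), \PHI(y)\rangle + \|\PHI(y)\|^2$. Fix $n \in \mathbb{N}$, points $x_1, \dots, x_n \in X$, and reals $a_1, \dots, a_n$ with $\sum_i a_i = 0$, and substitute this expansion into $\sum_{i,j} a_i a_j d(x_i,x_j)^2$. The two terms carrying $\|\PHI(x_i)\|^2$ and $\|\PHI(x_j)\|^2$ each factor out a copy of $\sum_i a_i = 0$ and therefore vanish, leaving exactly $-2\sum_{i,j} a_i a_j \langle \PHI(x_i), \PHI(x_j)\rangle = -2\bigl\| \sum_i a_i \PHI(x_i)\bigr\|^2 \leq 0$. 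This is precisely the defining inequality of $2$-negative type.

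For the reverse direction, the key observation is that $(X,d)$ being of $2$-negative type is literally the assertion that the derived semi-metric space $(X, d^2)$ is of $1$-negative type; note that $d^2$ is a semi-metric in the sense of the paper, since it is symmetric, nonnegative, and vanishes on the diagonal. Fixing a basepoint $x_0 \in X$ and applying Theorem~\ref{thm:schoenberg} to $(X, d^2)$, I obtain that $k(x,y) = d(x,x_0)^2 + d(y,x_0)^2 - d(x,y)^2$ is a positive definite kernel. Theorem~\ref{thm:kernelfeaturemap} then furnishes a Hilbert space $\mathcal{H}$ and a map $\PHI : X \to \mathcal{H}$ with $\langle \PHI(x), \PHI(y)\rangle = k(x,y)$. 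Computing $\|\PHI(x) - \PHI(y)\|^2 = k(x,x) - 2k(x,y) + k(y,y)$ and using $k(x,x) = 2d(x,x_0)^2$ collapses the expression to $2\,d(x,y)^2$, so rescaling $\PHI$ by $1/\sqrt{2}$ produces the desired isometric embedding.

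All the computations here are routine linear-algebra expansions; the only step requiring any care is recognizing the reduction in the reverse direction, namely that $2$-negative type of $d$ is exactly $1$-negative type of the semi-metric $d^2$, so that the Schoenberg-type equivalence of Theorem~\ref{thm:schoenberg} can be invoked. The spurious factor of $\sqrt{2}$ emerging from the feature map is a cosmetic defect removed by rescaling and poses no real obstacle.
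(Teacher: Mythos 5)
Your proof is correct. Note first that the paper itself offers no proof of this statement: it is imported by citation from Wells and Williams, so there is no in-paper argument to compare against. Your derivation is the classical Schoenberg-style argument, and it is essentially how the result is proved in the cited literature. The forward direction is a clean expansion of $d(x_i,x_j)^2$ via the polarization identity, with the $\|\PHI(x_i)\|^2$ terms killed by $\sum_i a_i = 0$ and the cross term collapsing to $-2\bigl\|\sum_i a_i \PHI(x_i)\bigr\|^2 \leq 0$. The reverse direction correctly identifies that $2$-negative type of $(X,d)$ is, verbatim, $1$-negative type of the semi-metric space $(X,d^2)$; this reduction is legitimate precisely because the paper states Theorem~\ref{thm:schoenberg} for semi-metric spaces rather than metric spaces, which matters since $d^2$ generally violates the triangle inequality. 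The remaining computation, $\|\PHI(x)-\PHI(y)\|^2 = k(x,x) - 2k(x,y) + k(y,y) = 2d(x,y)^2$, and the rescaling by $1/\sqrt{2}$ are both correct. What your approach buys, relative to the paper's bare citation, is a self-contained proof within the paper's framework: the only external inputs are Theorems~\ref{thm:schoenberg} and \ref{thm:kernelfeaturemap}, which the paper already assumes. The one caveat is that those two inputs are themselves cited without proof, so your argument shifts rather than eliminates the reliance on the literature --- but that is exactly the right economy for a paper whose contributions lie elsewhere.
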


Besides $2$-negative type characterizing isometric embeddability into a Hilbert space, the following theorem states the important property that negative type is downward closed.

\begin{theorem}[{\citealt[Theorem 4.7]{wells2012embeddings}}]
\label{thm:negtypedownward}
	Suppose $(X,d)$ is a semi-metric space of $q$-negative type. Then it is of $q'$-negative type for any $0 \leq q' \leq q$.
\end{theorem}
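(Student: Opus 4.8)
The plan is to exploit the equivalence recorded in Theorem~\ref{thm:schoenberg} between $1$-negative type and positive definiteness of the associated exponential kernels, combined with the classical integral representation of the power function $t \mapsto t^s$ for $0 < s < 1$. First I would dispose of the boundary cases: if $q = 0$ or $q' = q$ there is nothing to prove, and the case $q' = 0$ holds for every semi-metric space since $\sum_{i,j} a_i a_j = (\sum_i a_i)^2 = 0$. So I would assume $0 < q' < q$ and set $s = q'/q \in (0,1)$.

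The crucial observation is that the hypothesis that $(X,d)$ is of $q$-negative type is literally the statement that the semi-metric space $(X, d^q)$ is of $1$-negative type; note that $d^q$ is again a semi-metric, since the definition requires neither the triangle inequality nor that distinct points have positive distance. Applying the equivalence of items (1) and (3) in Theorem~\ref{thm:schoenberg} to the semi-metric $\rho := d^q$, I conclude that for every $\lambda > 0$ the kernel $(x,y) \mapsto e^{-\lambda\, d(x,y)^q}$ is positive definite; that is, $\sum_{i,j} a_i a_j e^{-\lambda d(x_i,x_j)^q} \ge 0$ for all finite tuples of points and all real coefficients $a_i$, now with no constraint on $\sum_i a_i$.

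I would then invoke the representation
\[
t^s = C_s^{-1} \int_0^\oo \left(1 - e^{-\lambda t}\right) \lambda^{-1-s}\, d\lambda, \qquad t \ge 0,
\]
valid for $0 < s < 1$ with a finite positive constant $C_s = \int_0^\oo (1 - e^{-u}) u^{-1-s}\, du$ (obtained by the substitution $u = \lambda t$; finiteness follows since the integrand behaves like $u^{-s}$ near $0$ and like $u^{-1-s}$ near $\oo$). Fixing $x_1, \dots, x_n \in X$ and $a_1, \dots, a_n \in \R$ with $\sum_i a_i = 0$, and writing $d(x_i,x_j)^{q'} = (d(x_i,x_j)^q)^s$, I substitute this representation and interchange the finite sum with the integral to obtain
\[
\sum_{i,j=1}^n a_i a_j\, d(x_i,x_j)^{q'} = C_s^{-1} \int_0^\oo \sum_{i,j=1}^n a_i a_j \left(1 - e^{-\lambda d(x_i,x_j)^q}\right) \lambda^{-1-s}\, d\lambda.
\]
Since $\sum_{i,j} a_i a_j \cdot 1 = (\sum_i a_i)^2 = 0$, the constant term drops out, leaving the integrand equal to $-\sum_{i,j} a_i a_j e^{-\lambda d(x_i,x_j)^q}$, which is $\le 0$ for every $\lambda > 0$ by the positive definiteness established above. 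Integrating a nonpositive integrand against the positive measure $C_s^{-1}\lambda^{-1-s}\, d\lambda$ yields $\sum_{i,j} a_i a_j\, d(x_i,x_j)^{q'} \le 0$, which is exactly $q'$-negative type.

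The main obstacle is the bridge in the second paragraph: recognizing that $q$-negative type of $d$ is precisely $1$-negative type of $d^q$, so that Theorem~\ref{thm:schoenberg} applies and converts the hypothesis into positive definiteness of the family $e^{-\lambda d^q}$. Once that identification is in hand, the remainder is routine bookkeeping, namely verifying the integral identity and justifying the interchange of summation and integration, which is immediate because the sum is finite.
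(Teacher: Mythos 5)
Your proof is correct. Note that the paper does not actually prove this statement---it is quoted without proof from Wells and Williams---so there is no internal argument to compare against; what you give is the classical proof of that cited result (Schoenberg's subordination argument): identify $q$-negative type of $d$ with $1$-negative type of the semi-metric $d^q$, use Theorem~\ref{thm:schoenberg} to get positive definiteness of $e^{-\lambda d(\cdot,\cdot)^q}$ for all $\lambda>0$, and integrate against $\lambda^{-1-s}\,d\lambda$ via the representation $t^s = C_s^{-1}\int_0^{\infty}(1-e^{-\lambda t})\lambda^{-1-s}\,d\lambda$. All steps check out: $d^q$ is a semi-metric under the paper's definition (no triangle inequality is required), $C_s$ is finite for $0<s<1$, the sum--integral interchange is legitimate because each of the finitely many individual integrals is finite, and the cancellation of the constant term is a pointwise manipulation of the integrand, so no divergent integral is ever split.
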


\subsection{Isometric embeddability of diagram space}
\label{subsec:isoembed}

It was shown by \citet[Theorem 3.2]{2019arXiv190301051T} that $\DGM{p}$ is not of $1$-negative type for any $1 \leq p \leq \oo$. This leads to the following negative result.

\begin{theorem}
$\DGM{p}$ does not admit an isometric embedding into a Hilbert space for any $1 \leq p \leq \oo$.
\end{theorem}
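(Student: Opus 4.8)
The plan is to chain together the cited negative-type computation with the two structural characterization theorems already at our disposal, so that the statement follows with no additional computation. The key input is the result of \citet[Theorem 3.2]{2019arXiv190301051T} that $\DGM{p}$ fails to be of $1$-negative type for every $1 \leq p \leq \oo$; everything else is a matter of invoking the general theory of negative type developed above.

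First I would record the contrapositive of the downward-closure property. Theorem \ref{thm:negtypedownward} asserts that $q$-negative type implies $q'$-negative type for all $0 \leq q' \leq q$. Taking $q = 2$ and $q' = 1$, this says that any semi-metric space of $2$-negative type is automatically of $1$-negative type. Reading this implication in reverse, a space that is \emph{not} of $1$-negative type cannot be of $2$-negative type. Applying this to $\DGM{p}$ together with the cited failure of $1$-negative type, I conclude that $\DGM{p}$ is not of $2$-negative type for any $1 \leq p \leq \oo$. Finally I would apply Theorem \ref{thm:negtypeiso}, which identifies isometric embeddability into a Hilbert space with exactly the property of being of $2$-negative type; since $\DGM{p}$ lacks $2$-negative type, it admits no such embedding.

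There is no genuine obstacle at this stage: the entire difficulty of the statement has been externalized into the cited computation of \citet{2019arXiv190301051T}, and the role of this proof is purely to combine that fact with the characterization and downward-closure theorems. The one point requiring care is the direction of the implications, since it is easy to misuse downward closure; one must apply it contrapositively, deducing the failure of $2$-negative type from the failure of $1$-negative type, rather than the vacuous converse.
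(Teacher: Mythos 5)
Your proposal is correct and follows exactly the paper's own argument: cite the failure of $1$-negative type from \citet[Theorem 3.2]{2019arXiv190301051T}, apply Theorem \ref{thm:negtypedownward} contrapositively to rule out $2$-negative type, and conclude via Theorem \ref{thm:negtypeiso}. Your remark about applying downward closure in the contrapositive direction is precisely the one subtlety, and you handle it correctly.
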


\begin{proof}
	Let $1 \leq p \leq \oo$. Since $\DGM{p}$ is not of $1$-negative type, by Theorem \ref{thm:negtypedownward}, $\DGM{p}$ is not of $2$-negative type and so does not admit an isometric embedding into a Hilbert space by Theorem \ref{thm:negtypeiso}.
\end{proof}

\subsection{Coarse embeddings and related notions}
\label{subsec:coarseembedding}

If instead of demanding that distances be exactly preserved, we only require that distances be contracted or expanded a uniform amount, we arrive at the following definition.

\begin{definition}
	A map $f:(X,d) \to (Y, d')$ is a \emph{coarse embedding} or \emph{uniform embedding} if there exists non-decreasing $\rho_-, \rho_+ :[0, \oo) \to [0, \oo)$ such that
	\begin{enumerate}
		\item $\rho_-(d(x,y)) \leq d'(f(x), f(y)) \leq \rho_+(d(x,y))$ for all $x,y \in X$, and
		\item $\lim_{t \to \oo} \rho_-(t) = \oo$.
	\end{enumerate}
\end{definition}

Note that if $\rho_-(x) = Ax$ and $\rho_+(x) = Bx$ for some $0 < A \leq B$ then $f$ is a bi-Lipschitz embedding. This definition was introduced by \citet[p. 218]{gromov1993asymptotic} where he posed the question of whether every separable metric space, of which $\DGM{p}$ are examples \citep{mileyko2011probability, MR3230014}, admits a coarse embedding into a Hilbert space. This question was answered negatively by \citet{dranishnikov2002uniform}. The following definition gives a coarse analogue of covering dimension.

\begin{definition}
Let $n$ be a non-negative integer. A metric space $(X,d)$ has \emph{asymptotic dimension} $\leq n$ if for every $R > 0$ there exists a cover $\mathcal{U}$ of $X$ such that every ball of radius $R$ intersects at most $n+1$ elements of $\mathcal{U}$ and $\sup_{U \in \mathcal{U}} \sup \{ d(x,y)\ |\ x,y \in U\} < \oo$.
\end{definition}

\begin{theorem}[{\citealt[Example 11.5]{roe2003lectures}}]
\label{thm:roe}
If $X$ is a metric space with finite asymptotic dimension, then there exists a coarse embedding of $X$ into a Hilbert space.
\end{theorem}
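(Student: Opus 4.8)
The statement is classical, so the plan is to reproduce the standard construction that manufactures a coarse embedding out of uniformly bounded covers of bounded multiplicity at diverging scales. First I would trade the hypothesis for a more convenient, equivalent formulation of $\operatorname{asdim} X \le n$: for every $\lambda > 0$ there is a cover $\mathcal{U}$ that is uniformly bounded, has multiplicity at most $n+1$ (each point lies in at most $n+1$ members), and has Lebesgue number at least $\lambda$ (each ball of radius $\lambda$ sits inside some member). Applying this along a sequence of scales $R_k$ chosen to grow quickly enough that $\sum_k 1/R_k < \oo$ (for instance $R_k = 2^k$) produces covers $\mathcal{U}_k$ with diameter bounds $S_k := \sup_{U \in \mathcal{U}_k}\operatorname{diam} U$. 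If $X$ is bounded the second coarse-embedding condition is vacuous and the result is trivial, so I would assume $X$ unbounded.

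For each $k$ I would convert $\mathcal{U}_k$ into a map into the unit sphere of $\ell^2(\mathcal{U}_k)$. Put $\phi_U(x) = d(x, X \setminus U)$; each $\phi_U$ is $1$-Lipschitz and supported in $U$, while the Lebesgue number bound gives $\sigma_k(x) := \sum_U \phi_U(x) \ge R_k > 0$. Normalizing to $\bar\phi_U = \phi_U/\sigma_k$ yields a partition of unity subordinate to $\mathcal{U}_k$ with at most $n+1$ nonzero terms at each point, and I set $\Psi_k(x) = (\sqrt{\bar\phi_U(x)})_U$, a unit vector in $\ell^2(\mathcal{U}_k)$. The square-root device then gives
\[
\|\Psi_k(x) - \Psi_k(y)\|^2 = 2 - 2\sum_U \sqrt{\bar\phi_U(x)\,\bar\phi_U(y)} \le \sum_U |\bar\phi_U(x) - \bar\phi_U(y)|,
\]
and combining the Lipschitz and multiplicity bounds produces the small-scale estimate $\|\Psi_k(x) - \Psi_k(y)\|^2 \le C\, d(x,y)/R_k$ for a constant $C = C(n)$. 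At large scales, $\bar\phi_U(x)$ and $\bar\phi_U(y)$ are simultaneously positive only when $x, y \in U$, so $d(x,y) > S_k$ forces $\Psi_k(x) \perp \Psi_k(y)$ and hence $\|\Psi_k(x) - \Psi_k(y)\|^2 = 2$.

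I would then assemble these maps into $\mathcal{H} = \bigoplus_k \ell^2(\mathcal{U}_k)$. Fixing a basepoint $x_0$, set $F(x) = (\Psi_k(x) - \Psi_k(x_0))_k$; the small-scale estimate together with $\sum_k 1/R_k < \oo$ shows $F(x) \in \mathcal{H}$. The basepoint terms cancel, so the $k$-th coordinate of $F(x) - F(y)$ is exactly $\Psi_k(x) - \Psi_k(y)$ and $\|F(x) - F(y)\|^2 = \sum_k \|\Psi_k(x) - \Psi_k(y)\|^2$. The upper estimate gives $\|F(x) - F(y)\|^2 \le \sum_k \min(2, C\,d(x,y)/R_k) =: \rho_+(d(x,y))^2$, a finite, non-decreasing function of the distance; meanwhile every scale with $S_k < d(x,y)$ contributes a full $2$, so $\|F(x) - F(y)\|^2 \ge 2\,\#\{k : S_k < d(x,y)\} =: \rho_-(d(x,y))^2$.

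The crux is reconciling the two requirements on the scale sequence. For $F$ to take values in $\mathcal{H}$, only finitely many scales may contribute the value $2$ for a fixed pair, i.e. $\#\{k : S_k < t\} < \oo$ for each finite $t$, which amounts to $S_k \to \oo$; yet this same count must diverge as $t \to \oo$ to force $\rho_-(t) \to \oo$ and thereby satisfy the second coarse-embedding condition. Both hold precisely when $S_k \to \oo$, and for unbounded $X$ this is automatic: since $\mathcal{U}_k$ has Lebesgue number at least $R_k$, the ball $B(x_0, R_k)$ lies inside a single member, whence $S_k \ge \operatorname{diam} B(x_0, R_k) \to \operatorname{diam} X = \oo$. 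Establishing the two scale-wise estimates and checking this balance is the heart of the argument; the remaining verifications that $\rho_\pm$ are non-decreasing with $\rho_-(t) \to \oo$ are then routine.
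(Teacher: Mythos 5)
The paper itself gives no proof of this statement---it is quoted directly from Roe's book (Example 11.5), and the proof there is precisely the scale-indexed partition-of-unity construction you describe. Your proposal is correct and follows that standard argument: the square-root trick giving unit vectors $\Psi_k(x)$, the two scale-wise estimates (the Lipschitz bound $\|\Psi_k(x)-\Psi_k(y)\|^2 \le C\,d(x,y)/R_k$ with $C=C(n)$, which indeed goes through with $C=4(n+1)$, and orthogonality when $d(x,y)>S_k$), the summability requirement $\sum_k 1/R_k<\infty$, and the separate treatment of the bounded case so that $S_k\to\infty$ guarantees $\rho_-$ is finite-valued and divergent, are exactly the ingredients of the proof in the cited source.
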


Property A is a simple condition for discrete metric spaces that implies coarse embeddability into a Hilbert space.

\begin{definition}[{\citealt[Definition 2.1]{MR1728880}}]
	A discrete metric space $(X,d)$ has \emph{property A} if for any $r > 0$, $\varepsilon > 0$ there is a family of finite subsets $\{A_x\}_{x \in X}$ of $X \times \mathbb{N}$ such that
	\begin{enumerate}
		\item $(x,1) \in A_x$ for all $x \in X$;
		\item $\dfrac{|(A_x \setminus A_y)| + |(A_y \setminus A_x)|}{|A_x \cap A_y|} < \varepsilon$ whenever $d(x,y) \leq r$;
		\item there exists $R >0$ such that if $(x,m), (y,m) \in A_z$  for some $z \in X$, then $d(x,y) \leq R$.
	\end{enumerate}
\end{definition}

\begin{theorem}[{\citealt[Theorem 2.2]{MR1728880}}]
\label{thm:yu}
	If a discrete metric space $X$ has property A, then $X$ admits a coarse embedding into a Hilbert space.
\end{theorem}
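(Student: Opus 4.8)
The statement is Yu's theorem, so I would reproduce the standard construction that turns the sets $\{A_x\}$ furnished by property A into an explicit map into a Hilbert space built from normalized indicator functions. The plan is as follows. For each $n \in \mathbb{N}$, apply the definition of property A with $r = n$ and $\varepsilon = 2^{-n}$ to obtain a family $\{A_x^{(n)}\}_{x \in X}$ of finite subsets of $X \times \mathbb{N}$ together with the constant $R_n$ from condition (3). For each $x$ and $n$, set $\xi_x^{(n)} = \mathbf{1}_{A_x^{(n)}} / |A_x^{(n)}|^{1/2} \in \ell^2(X \times \mathbb{N})$; this is a well-defined unit vector since $(x,1) \in A_x^{(n)}$ forces $A_x^{(n)} \neq \emptyset$. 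Fixing a basepoint $x_0 \in X$, I would define a map into the Hilbert space $\mathcal{H} = \bigoplus_{n=1}^{\infty} \ell^2(X \times \mathbb{N})$ by
\[
f(x) = \bigoplus_{n=1}^{\infty} \left( \xi_x^{(n)} - \xi_{x_0}^{(n)} \right),
\]
and then verify that $f$ is a coarse embedding. The basepoint cancels in differences, $f(x) - f(y) = \bigoplus_n (\xi_x^{(n)} - \xi_y^{(n)})$, so $\|f(x) - f(y)\|_{\mathcal{H}}^2 = \sum_{n} \|\xi_x^{(n)} - \xi_y^{(n)}\|_2^2$, and the whole problem reduces to estimating the per-level terms from above and below.

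For the upper bound $\rho_+$, the main tool is the elementary inequality $\|\sqrt{\mu} - \sqrt{\nu}\|_2^2 \leq \|\mu - \nu\|_1$ for probability vectors $\mu, \nu$, applied to $\mu = \mathbf{1}_{A_x^{(n)}}/|A_x^{(n)}|$ and $\nu = \mathbf{1}_{A_y^{(n)}}/|A_y^{(n)}|$, whose square roots are exactly $\xi_x^{(n)}$ and $\xi_y^{(n)}$. A short computation with the $\ell^1$ norm gives $\|\mu - \nu\|_1 \le 2|A_x^{(n)} \triangle A_y^{(n)}| / |A_x^{(n)} \cap A_y^{(n)}|$, so condition (2) yields $\|\xi_x^{(n)} - \xi_y^{(n)}\|_2^2 < 2\varepsilon = 2^{1-n}$ whenever $d(x,y) \le n$. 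Hence, for a fixed pair $x,y$ at distance $t = d(x,y)$, the levels $n \ge t$ contribute at most $\sum_{n \ge t} 2^{1-n}$, while the fewer than $t$ remaining levels each contribute at most $4$ (a difference of unit vectors). Taking $y = x_0$ shows $f(x) \in \mathcal{H}$, and in general this gives $\|f(x) - f(y)\|_{\mathcal{H}}^2 \le 4t + 2$, so $\rho_+(t) = (4t + 2)^{1/2}$ works.

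The crux, and the step I expect to be the main obstacle, is the lower bound $\rho_-$, which is where condition (3) must enter. The goal is that $\xi_x^{(n)}$ and $\xi_y^{(n)}$ have disjoint supports once $d(x,y)$ is large relative to $R_n$, for then they are orthogonal unit vectors and $\|\xi_x^{(n)} - \xi_y^{(n)}\|_2^2 = 2$. Concretely, I would use conditions (1) and (3) to confine the points of $X$ appearing as first coordinates of $A_x^{(n)}$ to a bounded neighborhood of $x$ whose radius is controlled by $R_n$, so that there is a threshold $T_n$ (a fixed multiple of $R_n$) with $d(x,y) > T_n$ forcing $A_x^{(n)} \cap A_y^{(n)} = \emptyset$. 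Writing $N(t) = \#\{\, n : T_n < t \,\}$, this gives $\|f(x) - f(y)\|_{\mathcal{H}}^2 \ge 2\,N(d(x,y))$; since each fixed $n$ has $T_n < \oo$, the function $N$ is non-decreasing with $N(t) \to \oo$, so $\rho_-(t) = (2N(t))^{1/2}$ is non-decreasing and tends to infinity, as required. The delicate point is extracting genuine disjointness, rather than merely small overlap, from the per-level diameter control in condition (3); if only small overlap is available, one instead bounds $\langle \xi_x^{(n)}, \xi_y^{(n)} \rangle$ away from $1$ at the relevant scales and sums the resulting positive contributions, which suffices equally well.
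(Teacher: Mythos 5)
Your proposal cannot be compared against an internal argument, because the paper does not prove this statement at all: Theorem~\ref{thm:yu} is quoted from Yu's paper. Measured on its own terms, your outline is the standard construction, and the upper-bound half is correct: the inequality $\|\sqrt{\mu}-\sqrt{\nu}\|_2^2\le\|\mu-\nu\|_1$, the estimate $\|\mu-\nu\|_1\le 2\bigl(|A_x\setminus A_y|+|A_y\setminus A_x|\bigr)/|A_x\cap A_y|$, the resulting bound $\|f(x)-f(y)\|_{\mathcal{H}}^2\le 4\,d(x,y)+2$, and the well-definedness of $f$ all check out. The genuine gap is exactly at the step you flagged as the crux, and it cannot be repaired from the definition of property~A as stated in this paper. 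Conditions (1) and (3) do \emph{not} confine the first coordinates of $A_x^{(n)}$ to a bounded neighborhood of $x$: condition (3) only bounds the diameter of each horizontal slice $\{z\in X : (z,m)\in A_x^{(n)}\}$ separately, and condition (1) ties only the $m=1$ slice to $x$; the slices at levels $m\ge 2$ may sit arbitrarily far from $x$. Consequently neither disjointness of supports nor your fallback (bounding $\langle\xi_x^{(n)},\xi_y^{(n)}\rangle$ away from $1$) follows.

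In fact, no argument can extract any lower bound from the stated conditions, because as written they are degenerate: \emph{every} discrete metric space satisfies them. Fix any $z_0\in X$ and an integer $N>2/\varepsilon$, and put
\[
A_x \;=\; \{(x,1)\}\cup\bigl(\{z_0\}\times\{2,\dots,N+1\}\bigr).
\]
Condition (1) holds; condition (3) holds with any $R>0$, since every slice is a singleton; and condition (2) holds for \emph{every} pair $x,y$, not just nearby ones, because $|A_x\setminus A_y|+|A_y\setminus A_x|\le 2$ while $|A_x\cap A_y|=N$. Feeding this family into your construction gives $\|\xi_x^{(n)}-\xi_y^{(n)}\|_2^2=2/(N+1)$ for all $x\ne y$, so $f$ has uniformly bounded image and is not a coarse embedding of any unbounded space. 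Note also that under this literal reading the non-embeddable space $(X,\tilde{d})$ from the proof of Theorem~\ref{thm:embeddingdb} would ``have property A,'' so the theorem itself would be false and Corollary~\ref{cor:propertya} false as well. The hypothesis actually used in the standard proofs is the stronger form of condition (3): there exists $R>0$ such that $(y,m)\in A_x$ implies $d(x,y)\le R$, i.e.\ $A_x\subseteq B(x,R)\times\mathbb{N}$. Under that hypothesis your argument closes immediately: any $(w,m)\in A_x^{(n)}\cap A_y^{(n)}$ forces $d(x,y)\le 2R_n$, so $T_n=2R_n$ is a valid threshold, supports are genuinely disjoint beyond that scale, and $\rho_-(t)=\bigl(2\,\#\{n : 2R_n<t\}\bigr)^{1/2}$ finishes the proof. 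In short: right construction and correct upper bound, but the confinement of supports you need for the lower bound is not a consequence of conditions (1) and (3) --- it must be taken as the hypothesis itself.
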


The following definition was introduced by \citet{enflo1970problem} to answer negatively a question of Smirnov about uniform homeomorphisms into $L_2[0,1]$. Indeed, the negative answer to Gromov's question by Dranishnikov et al. was inspired by Enflo's negative answer to Smirnov's. 

\begin{definition}
	Let $q \geq 0$. A metric space $(X,d)$ has \emph{generalized roundness} $q$ if for any $n \in \mathbb{N}$ and $a_1, \dots, a_n, b_1, \dots, b_n \in X$, we have
	\[
	\sum_{i < j}(d(a_i, a_j)^q + d(b_i, b_j)^q) \leq \sum_{i,j} d(a_i, b_j)^q
	\]
Similarly to negative type, we define the generalized roundness of a metric space $(X, d)$ to be the supremum of the set of $q \in [0, \oo)$ such that $(X,d)$ has generalized roundness $q$.
\end{definition}

\section{Coarse embeddability of diagrams with bottleneck distance}
\label{sec:coarseembeddabilitybottleneck}

The main result of this section is that there does not exist a coarse embedding of $\DGM{\oo}$ into a Hilbert space. This implies that the generalized roundness and asymptotic dimension of $\DGM{\oo}$ are $0$ and $\oo$, respectively. We also show that any separable, bounded metric space has an isometric embedding into the space of persistence diagrams with the bottleneck distance. The isometric embedding in question can be thought of as a shifted version of the Kuratowski embedding.

\begin{figure}
\begin{minipage}{0.45\textwidth}
\begin{tikzpicture}[square/.style={regular polygon, regular polygon sides=4}, triangle/.style={regular polygon, regular polygon sides=3}]
	\draw (0,0) -- node[below] {0.5} (3.5,1);
	\draw (0,0) -- node[left] {$0.8\ $} (3,3);
	\draw (3,3) -- node[right] {0.3} (3.5,1);
	\draw (0,0) node[circle, fill, blue, inner sep = 3pt] {};
	\draw (3.5,1)	node[triangle, fill, green, inner sep = 2.3pt] {};
	\draw (3, 3) node[square, fill, red, inner sep = 2.5pt] {};
	\draw (-.4,0) node {$x_3$};
	\draw (3,3.4) node {$x_1$};
	\draw (4,1) node {$x_2$};
\end{tikzpicture}
\end{minipage}%
\begin{minipage}{0.45\textwidth}
\begin{tikzpicture}[square/.style={regular polygon, regular polygon sides=4}, triangle/.style={regular polygon, regular polygon sides=3}]
	  \draw[->] (-0.5,0) -- (5,0) node[right] {$x$};
      \draw[->] (0,-0.5) -- (0,7) node[above] {$y$};
      \draw[->] (0,0) -- (5,5) node[right] {$\Delta$};
      \draw[{[-]}, ultra thick, black] (0,2) -- (0,3);
      \draw[{[-]}, ultra thick, black] (2,4) -- (2,5);
      \draw[{[-]}, ultra thick, black] (4,6) -- (4,7);
      \draw (-0.6, 2) node {$(0, 2)$};
      \draw (-0.6, 3) node {$(0, 3)$};
      \draw (1.4, 4) node {$(2, 4)$};
      \draw (1.4, 5) node {$(2, 5)$};
      \draw (3.4, 6) node {$(4, 6)$};
      \draw (3.4, 7) node {$(4, 7)$};
      \draw (0,2) node[square, fill, red, inner sep = 1.6pt] {};
      \draw (2,4.3) node[square, fill, red, inner sep = 1.6pt] {};
      \draw (4,6.8) node[square, fill, red, inner sep = 1.6pt] {};
      \draw (2,4) node[triangle, fill, green, inner sep = 1.5pt] {};
      \draw (0,2.3) node[triangle, fill, green, inner sep = 1.5pt] {};
      \draw (4,6.5) node[triangle, fill, green, inner sep = 1.5pt] {};
      \draw (4,6) node[circle, fill, blue, inner sep = 2pt] {};
      \draw (0,2.8) node[circle, fill, blue, inner sep = 2pt] {};
      \draw (2,4.5) node[circle, fill, blue, inner sep = 2pt] {};
\end{tikzpicture}
\end{minipage}
\caption{A metric space with three points and its image in $\DGM{\oo}$ under the map defined in Theorem~\ref{thm:finitems} for $c = 1$, $x_i \mapsto \{(2(k-1), 2k + d(x_i, x_k)\}_{k=1}^{3}$.}
\label{fig:isoembedbottleneck}
\end{figure}
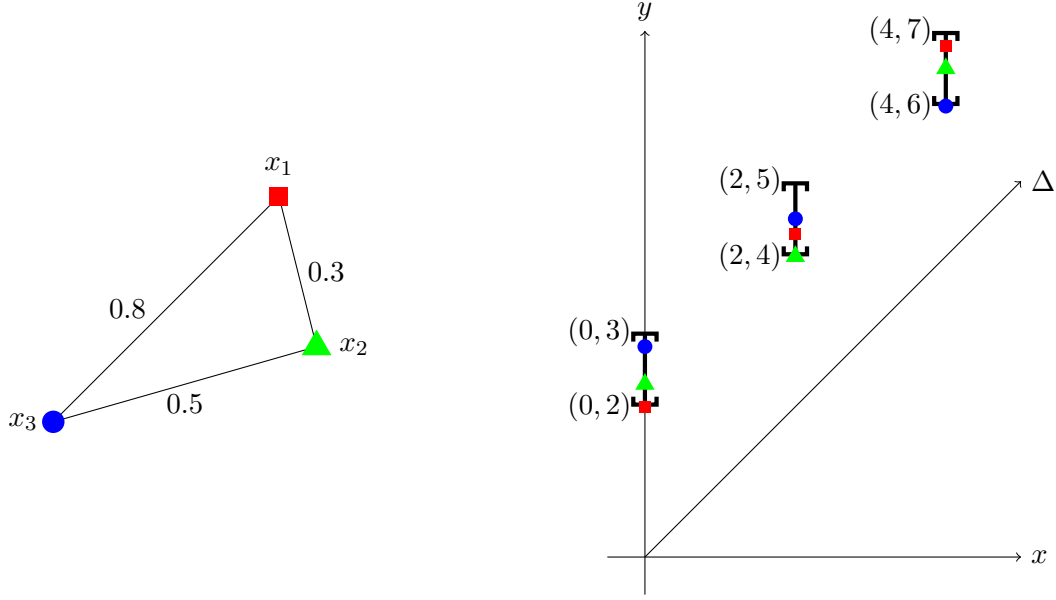

\begin{theorem}
\label{thm:finitems}
Suppose $(X,d)$ is a separable, bounded metric space. Then there exists an isometric embedding $\PHI: (X,d) \to \DGM{\oo}$. Moreover, if $c > \sup \{ d(x,y)\ |\ x,y \in X\}$, we may choose $\PHI$ such that $\PHI(X) \subseteq B(\emptyset, \frac{3c}{2})\setminus B(\emptyset, c)$, where $B(\emptyset, r) = \{D \in \text{Dgm}_{\oo}\ |\ w_{\oo}(D, \emptyset) < r\}$.
\end{theorem}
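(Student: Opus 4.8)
The plan is to realize $\PHI$ as a shifted Kuratowski embedding. Fix a countable dense subset $\{x_k\}_{k \geq 1} \subseteq X$ (finite if $X$ is finite) and define $\PHI(x)$ to be the persistence diagram indexed by this set,
\[
\PHI(x) = \{(2c(k-1),\ 2ck + d(x,x_k))\}_{k \geq 1}.
\]
Since $d(x,x_k) \geq 0$, each birth lies strictly below its death, so $\PHI(x)$ is a genuine persistence diagram, and its $k$-th point lies at distance $d_\oo(\,\cdot\,,\Delta) = c + \tfrac12 d(x,x_k)$ from the diagonal. As $d(x,x_k) \in [0, \operatorname{diam}(X)) \subseteq [0,c)$, taking the supremum over $k$ gives $w_\oo(\PHI(x), \emptyset) \in [c, \tfrac{3c}{2})$; this simultaneously shows $\PHI(x) \in \DGM{\oo}$ and that $\PHI(X)$ lands in the prescribed annulus $B(\emptyset, \tfrac{3c}{2}) \setminus B(\emptyset, c)$.

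For the upper bound on $w_\oo(\PHI(x), \PHI(y))$ I would use the index-preserving matching, which pairs the $k$-th point of $\PHI(x)$ with the $k$-th point of $\PHI(y)$. Because the births agree, the $\ell^\oo$ distance between paired points is $|d(x,x_k) - d(y,x_k)| \leq d(x,y)$ by the triangle inequality, so this matching has cost at most $d(x,y)$. Moreover, choosing $x_k \to x$ by density forces $|d(x,x_k) - d(y,x_k)| \to d(x,y)$, so the supremum defining the $\oo$-cost equals $d(x,y)$ exactly, and hence $w_\oo(\PHI(x), \PHI(y)) \leq d(x,y)$.

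The main work is the reverse inequality $w_\oo(\PHI(x), \PHI(y)) \geq d(x,y)$, where the spacing of the birth coordinates does the heavy lifting, and I expect this to be the crux of the argument. The births are separated by multiples of $2c$, so matching the $k$-th point of $\PHI(x)$ to the $m$-th point of $\PHI(y)$ with $k \neq m$ costs at least $2c|k-m| \geq 2c$, while leaving any point unmatched costs at least its distance to the diagonal, namely at least $c$. Since $c > \operatorname{diam}(X) \geq d(x,y)$, both alternatives strictly exceed $d(x,y)$, so any partial matching of cost $\leq d(x,y)$ must pair each index with itself and omit nothing---that is, it must be the index-preserving matching, whose cost is $d(x,y)$. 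This gives $w_\oo(\PHI(x),\PHI(y)) = d(x,y)$, so $\PHI$ is the desired isometric embedding. The only delicate points I anticipate are the bookkeeping for the $p = \oo$ cost (an approached rather than attained supremum) and making sure the strict inequality $\operatorname{diam}(X) < c$ is used to rule out every non-index-preserving matching at once.
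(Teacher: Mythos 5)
Your proposal is correct and is essentially identical to the paper's own proof: the same shifted Kuratowski map $x \mapsto \{(2c(k-1),\, 2ck + d(x,x_k))\}_k$, the same density argument showing the index-preserving matching has cost exactly $d(x,y)$, and the same optimality argument that any other partial matching costs at least $c$ (unmatched points) or $2c$ (cross-matched points), both exceeding $d(x,y)$ since $c > \operatorname{diam}(X)$. No gaps to report.
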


\begin{proof}

Let $c > \sup \{ d(x,y)\ |\ x,y \in X\}$. Let $\{x_k\}_{k=1}^{\oo}$ be a countable, dense subset of $(X,d)$. Consider the following map.
\begin{align*}
\PHI: (X,d) &\to \DGM{\oo} \\
x &\mapsto \{(2c(k-1), 2ck + d(x, x_k)\}_{k=1}^{\oo}
\end{align*}
Note that for any $x\in X$ and $k \in \mathbb{N}$,
\[
d_\oo((2c(k-1), 2ck + d(x, x_k)), \Delta) = c + \frac{d(x, x_k)}{2} < \frac{3c}{2},
\]
so $\tilde{w}_{\oo}(\PHI(x), \emptyset) < \oo$ for every $x \in X$ and thus $\PHI$ is well-defined. Moreover, since
\[
w_{\oo}(\PHI(x), \emptyset) = \sup_{1 \leq k < \oo} d_\oo((2c(k-1), 2ck + d(x, x_k)), \Delta),
\]
it follows that $\PHI(x) \in B(\emptyset, \frac{3c}{2})\setminus B(\emptyset, c)$.
 A visualization of the image of $\PHI$ for a metric space with three points is shown in Figure \ref{fig:isoembedbottleneck}. We now show that for $y \in X$ an optimal partial matching of $\PHI(x)$ and $\PHI(y)$ matches points in each diagram with the same first coordinate, and the cost of this partial matching is $d(x, y)$. 

For the equivalence classes $\PHI(x)$ and $\PHI(y)$, choose representative persistence diagrams $D_x : \mathbb{N} \to \HP$ and $D_y: \mathbb{N} \to \HP$. Consider the partial matching $(\mathbb{N}, \mathbb{N}, \ID_\mathbb{N})$ between $D_x$ and $D_y$, i.e. $(2c(k-1), 2ck + d(x, x_k))$ is matched with $(2c(k-1), 2ck + d(y, x_k))$ for every $k \in \mathbb{N}$. Observe that $d_\oo(D_x(k), D_y(k)) = |d(x, x_k) - d(y, x_k)|$ for every $k$, so the cost of this partial matching is $\sup_k |d(x, x_k) - d(y, x_k)|$.
By the triangle inequality, 
\[
\sup_k |d(x, x_k) - d(y, x_k)| \leq d(x, y).
\]
Since $\{x_k\}_{k=1}^{\oo}$ is dense, for every $\varepsilon > 0$, there exists a $k$ such that $d(x, x_k) < \varepsilon$, so
\[
|d(x,x_k) - d(y, x_k)| \geq d(y, x_k) - d(x, x_k) \geq d(x,y) - 2d(x, x_k) > d(x,y) - 2\varepsilon. 
\]
This implies that $\sup_k |d(x, x_k) - d(y, x_k)| \geq d(x,y)$ and $\C_\oo(\ID_\mathbb{N}) = d(x,y)$. 

We will now prove that the partial matching described above is optimal. Suppose $I, J \subseteq \mathbb{N}$ and $(I, J, f)$ is a different partial matching between $D_x$ and $D_y$. Then there exists a $k \in \mathbb{N}$ such that either $k \notin I$ or $k \in I$ and $f(k) \neq k$. If $k \notin I$, then
\[
\C_{\oo}(f) \geq d_\oo((2c(k-1), 2ck + d(x, x_k)), \Delta) \geq c.
\]
If $k \in I$ and $f(k) = k' \neq k$, then
\[
\C_{\oo}(f) \geq \|(2c(k-1), 2ck + d(x, x_k)) - (2c(k'-1), 2ck' + d(y, x_{k'}))\|_{\oo} \geq 2c.
\]
Therefore, $\C_{\oo}(f) \geq c > d(x,y)$. Hence, $w_{\oo}(\PHI(x), \PHI(y)) = d(x, y)$, i.e. $\PHI$ is an isometric embedding.
\end{proof}

We now apply Theorem \ref{thm:finitems} to show the generalized roundness of $\DGM{\oo}$ is $0$. To do so, we embed a family of finite metric spaces, whose generalized roundness was observed by \citet{enflo1970problem} to converge to $0$, into $\DGM{\oo}$. One element of this family is shown in Figure~\ref{fig:knn}.

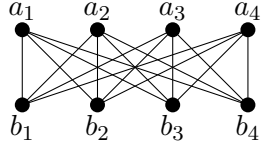
\begin{figure}[h]
\centering
	\begin{tikzpicture}
		\draw (0, 0) node[circle, fill, inner sep = 2pt] {};
		\draw (0, 0) node[below] {$b_1$};
		\draw (1, 0) node[circle, fill, inner sep = 2pt] {};
		\draw (1, 0) node[below] {$b_2$};
		\draw (2, 0) node[circle, fill, inner sep = 2pt] {};
		\draw (2, 0) node[below] {$b_3$};
		\draw (3, 0) node[circle, fill, inner sep = 2pt] {};
		\draw (3, 0) node[below] {$b_4$};
		\draw (0, 1) node[circle, fill, inner sep = 2pt] {};
		\draw (0, 1) node[above] {$a_1$};
		\draw (1, 1) node[circle, fill, inner sep = 2pt] {};
		\draw (1, 1) node[above] {$a_2$};
		\draw (2, 1) node[circle, fill, inner sep = 2pt] {};
		\draw (2, 1) node[above] {$a_3$};
		\draw (3, 1) node[circle, fill, inner sep = 2pt] {};
		\draw (3, 1) node[above] {$a_4$};
		\draw (0,0) -- (0,1);
		\draw (0,0) -- (1,1);
		\draw (0,0) -- (2,1);
		\draw (0,0) -- (3,1);
		\draw (1,0) -- (0,1);
		\draw (1,0) -- (1,1);
		\draw (1,0) -- (2,1);
		\draw (1,0) -- (3,1);
		\draw (2,0) -- (0,1);
		\draw (2,0) -- (1,1);
		\draw (2,0) -- (2,1);
		\draw (2,0) -- (3,1);
		\draw (3,0) -- (0,1);
		\draw (3,0) -- (1,1);
		\draw (3,0) -- (2,1);
		\draw (3,0) -- (3,1);
	\end{tikzpicture}	
	\caption{The metric space obtained from the complete bipartite graph $K_{n,n}$ when $n = 4$.}
	\label{fig:knn}
\end{figure}

\begin{corollary}
\label{cor:grbottleneck}
	The generalized roundness of $\DGM{\oo}$ is zero.
\end{corollary}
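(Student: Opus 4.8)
The plan is to combine the isometric embedding of Theorem~\ref{thm:finitems} with the elementary monotonicity of generalized roundness under isometric embeddings. If $(Y,d)$ embeds isometrically into $(Z,d')$ and $Z$ has generalized roundness $q$, then every finite configuration in $Y$ is realized with identical pairwise distances in $Z$, so the defining inequality at exponent $q$ holds for it; hence $Y$ has generalized roundness $q$ as well. Passing to suprema, the generalized roundness of $Z$ is at most that of any $Y$ that embeds isometrically into it. In particular, to bound the generalized roundness of $\DGM{\oo}$ from above it suffices to exhibit, inside $\DGM{\oo}$, isometric copies of spaces whose generalized roundness is small.

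Next I would introduce the family of finite metric spaces suggested by Figure~\ref{fig:knn}: take the vertex set $\{a_1,\dots,a_n,b_1,\dots,b_n\}$ of the complete bipartite graph $K_{n,n}$ equipped with its shortest-path metric, so that $d(a_i,b_j)=1$ for all $i,j$ while $d(a_i,a_j)=d(b_i,b_j)=2$ for $i\neq j$. Applying the generalized roundness inequality to the distinguished configuration $a_1,\dots,a_n,b_1,\dots,b_n$ gives
\[
n(n-1)\,2^q = \sum_{i<j}\bigl(d(a_i,a_j)^q + d(b_i,b_j)^q\bigr) \leq \sum_{i,j} d(a_i,b_j)^q = n^2,
\]
so $K_{n,n}$ can have generalized roundness $q$ only if $2^q \leq n/(n-1)$, i.e. $q \leq \log_2\bigl(n/(n-1)\bigr)$. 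This recovers the observation of \citet{enflo1970problem} that the generalized roundness of this family tends to $0$ as $n \to \oo$.

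To finish, I would note that each $K_{n,n}$ is finite and therefore a separable, bounded metric space, so Theorem~\ref{thm:finitems} provides an isometric embedding of $K_{n,n}$ into $\DGM{\oo}$. By the monotonicity above, the generalized roundness of $\DGM{\oo}$ is at most $\log_2\bigl(n/(n-1)\bigr)$ for every $n$, and letting $n \to \oo$ forces it to be $\leq 0$. Since generalized roundness is nonnegative (the inequality at $q=0$ reduces to $n(n-1) \leq n^2$), we conclude that the generalized roundness of $\DGM{\oo}$ is exactly $0$. There is no serious obstacle here: the substantive work is carried out in Theorem~\ref{thm:finitems}, and the only points requiring care are verifying the shortest-path distances in $K_{n,n}$ and confirming that the embedding is exactly isometric, so that the generalized roundness inequality for $\DGM{\oo}$ transfers verbatim to the realized configuration.
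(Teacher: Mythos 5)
Your proof is correct and follows essentially the same route as the paper: isometrically embed the $K_{n,n}$ family of \citet{enflo1970problem} into $\DGM{\oo}$ via Theorem~\ref{thm:finitems}, and use the computation $n(n-1)2^q \leq n^2$ to force the generalized roundness down to $0$ as $n \to \oo$. The only difference is that you make explicit two points the paper leaves implicit, namely the monotonicity of generalized roundness under isometric embedding and the nonnegativity check at $q = 0$.
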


\begin{proof}
	Let $n \geq 2$. Define $K_{n,n} = \{a_1, \dots, a_n, b_1, \dots, b_n\}$ and equip this set with the metric $d(a_i, a_j) = d(b_i, b_j) = 2$ for any $i,j \in \{1, \dots, n\}$ with $i \neq j$ and $d(a_i, b_j) = 1$. \citet{enflo1970problem} remarks that $X_n$ has generalized roundness that converges to $0$ as $n\to \oo$. Indeed,
	\begin{align*}
		\sum_{i < j}(d(a_i, a_j)^q + d(b_i, b_j)^q) &\leq \sum_{i,j} d(a_i, b_j)^q \iff \\
		n(n-1)2^q &\leq n^2 \iff \\
		q &\leq \log_2 (1 + (n-1)^{-1}).
	\end{align*}
Hence, $X_n$ has generalized roundness at most $\log_2 (1 + (n-1)^{-1})$ which tends to $0$ as $n$ increases. By Theorem \ref{thm:finitems}, we may isometrically embed $X_n$ into $\DGM{\oo}$ for any $n$ so the generalized roundness of $\DGM{\oo}$ must be zero.
\end{proof}

Our next result is that $\DGM{\oo}$ does not admit a coarse embedding into a Hilbert space. The proof relies on a construction of \citet{dranishnikov2002uniform} based on ideas of \citet{enflo1970problem}.
\begin{theorem}
\label{thm:embeddingdb}
	$\DGM{\oo}$ does not admit a coarse embedding into a Hilbert space.
\end{theorem}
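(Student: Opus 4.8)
The plan is to exhibit, for a contradiction, a family of finite metric spaces that embed isometrically into $\DGM{\oo}$ yet obstruct coarse embeddability into any Hilbert space in a way that survives the nonlinear distortion a coarse embedding is allowed. The reduction is immediate from Theorem~\ref{thm:finitems}: every finite metric space is bounded and separable, hence embeds isometrically into $\DGM{\oo}$. So if $f\colon \DGM{\oo}\to\mathcal H$ were a coarse embedding with control functions $\rho_-,\rho_+$, then for any finite space $F$ with isometric embedding $\PHI_F\colon F\to\DGM{\oo}$ the composite $f\circ\PHI_F\colon F\to\mathcal H$ would satisfy $\rho_-(d_F(x,y))\le \|f\PHI_F(x)-f\PHI_F(y)\|\le\rho_+(d_F(x,y))$ with the \emph{same} pair $\rho_-,\rho_+$, independent of $F$. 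It therefore suffices to produce a single family $\{F_n\}$ of finite spaces that admits no coarse embedding into Hilbert space for one uniform pair of control functions.

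The subtlety — and the reason Corollary~\ref{cor:grbottleneck} does not already prove the theorem — is that generalized roundness tending to $0$ only rules out an \emph{isometric} ($2$-negative-type) embedding via Theorem~\ref{thm:negtypeiso}. For the bipartite spaces $X_n$ used there all distances are comparable (ratio $2$), so the roundness inequality fails only by a bounded factor, and such a bounded failure is absorbed by a mildly nonlinear $\rho_\pm$. To defeat \emph{every} coarse embedding I need an obstruction that compares distances tending to infinity against a \emph{fixed} short scale, so that the relevant upper bound $\rho_+(\text{short scale})$ is a genuine constant; this is the mechanism underlying the Enflo--Dranishnikov argument. Concretely I would take $\{F_n\}$ to be bounded-degree expander graphs with their graph metrics: $D$-regular, with a uniform lower bound $\lambda_0>0$ on the spectral gap of the normalized Laplacian and with $|V_n|\to\oo$. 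Since balls of radius $r$ contain at most $D^{r+1}$ vertices, for each vertex at least half of all vertices lie at distance $\ge r_n:=\log_D(|V_n|/2)-1$, and $r_n\to\oo$, while all edges stay at the fixed distance $1$.

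The heart of the argument is the Poincaré inequality for maps of an expander into a Hilbert space: for any $g\colon V_n\to\mathcal H$,
\[
\frac{1}{|V_n|^2}\sum_{x,y\in V_n}\|g(x)-g(y)\|^2 \;\le\; \frac{C}{\lambda_0}\,\frac{1}{|V_n|\,D}\sum_{\{x,y\}\in E_n}\|g(x)-g(y)\|^2 .
\]
Applying this to $g=f\circ\PHI_{F_n}$ and inserting the coarse bounds — at least half the ordered pairs sit at distance $\ge r_n$, giving a lower bound of order $\rho_-(r_n)^2$ on the left, while each of the $|E_n|=\tfrac12|V_n|D$ edges contributes at most $\rho_+(1)^2$ on the right — yields $\rho_-(r_n)^2\le C'\lambda_0^{-1}\rho_+(1)^2$ for an absolute constant $C'$. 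As $\lambda_0$ and $\rho_+(1)$ are fixed while $r_n\to\oo$, this contradicts $\lim_{t\to\oo}\rho_-(t)=\oo$, proving no coarse embedding of $\DGM{\oo}$ exists. I expect the main obstacle to be the robustness bookkeeping rather than any single hard estimate: confirming that a uniform positive fraction of vertex pairs really lies at distances tending to infinity (so the left-hand lower bound has the claimed order), and checking that Theorem~\ref{thm:finitems} transports each $F_n$ into $\DGM{\oo}$ with distances preserved \emph{exactly}, so that the one global pair $\rho_-,\rho_+$ of the hypothetical coarse embedding applies simultaneously across the whole family. Once this is arranged, the displayed inequality closes the argument.
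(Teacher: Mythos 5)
Your proof is correct, but it takes a genuinely different route from the paper's. Both arguments share the same reduction: Theorem~\ref{thm:finitems} transports a family of finite obstruction spaces isometrically into $\DGM{\oo}$, and because the embeddings preserve distances exactly, one fixed pair $(\rho_-,\rho_+)$ of the hypothetical coarse embedding controls the whole family at once. Where you diverge is in the choice of obstruction and how its non-embeddability is certified. The paper uses the $\ell^\oo$-cubes $\Z_n^m$ and must realize their \emph{disjoint union} as a single metric subspace of $\DGM{\oo}$, because the non-embeddability result it invokes (Proposition 6.3 of Dranishnikov--Gong--Lafforgue--Yu) concerns one metric space; this is precisely why Theorem~\ref{thm:finitems} carries the extra annulus clause $\PHI(X)\subseteq B(\emptyset,\frac{3c}{2})\setminus B(\emptyset,c)$ and why the paper's proof sets up the rapidly growing scales $c_i=4\max(c_{i-1},n_i+m_i)$ to force the required lower bound on inter-piece distances. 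Your expander argument needs none of that machinery: since uniformity of $(\rho_-,\rho_+)$ across the family is automatic, you treat each $F_n$ separately and close the argument with the Hilbert-space Poincar\'e inequality, yielding $\rho_-(r_n)^2\le C\lambda_0^{-1}\rho_+(1)^2$ with $r_n\to\oo$, a contradiction. What you gain is a shorter, analytically self-contained proof that uses only the basic isometric-embedding part of Theorem~\ref{thm:finitems}; what you pay is a different external input, the existence of bounded-degree expander families with uniform spectral gap, whereas the paper's obstruction (cubes with the $\ell^\oo$ metric) is philosophically closer to the $\ell^\oo$ nature of the bottleneck distance. One further trade-off: the paper's disjoint-union construction produces a single discrete subspace $\PHI(X)\subseteq\DGM{\oo}$ that is reused verbatim for Corollary~\ref{cor:propertya} (failure of property A via Theorem~\ref{thm:yu}); your family-by-family argument would need the annulus placement reinstated to recover that corollary. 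Your opening observation---that Corollary~\ref{cor:grbottleneck} alone cannot rule out coarse embeddings because the roundness failure in $X_n$ occurs at a single bounded scale and is absorbed by nonlinear $\rho_\pm$---is also correct and identifies exactly why a scale-separated obstruction is needed.
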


\begin{proof}
	Define $\Z_n$ to be the integers mod $n$ with $d_n$, the metric induced by the standard metric $d(x,y) = |x-y|$ on $\Z$. Define $\Z_n^m$ to be the Cartesian product of $m$ copies of $\Z_n$ with the following metric,
	\[
	d_{n,m}(([k_1], \dots, [k_m]), ([l_1], \dots, [l_m])) = \max_{1 \leq i \leq m} d_n([k_i], [l_i]).
	\]
	Let $X$ be the disjoint union of $\Z_n^m$ for every $n,m \geq 1$ and suppose $\tilde{d}$ is a metric on $X$ satisfying the following.
	\begin{enumerate}[label=(\arabic*)]
		\item The restriction of $\tilde{d}$ to each $\Z_n^m$ coincides with $d_{n, m}$.
		\item $\tilde{d}(x,y) \geq n + m + n' + m'$ if $x \in \Z_{n}^{m}$, $y \in \Z_{n'}^{m'}$, and $(n, m) \neq (n', m')$.
	\end{enumerate}
	
	Proposition 6.3 of \citet{dranishnikov2002uniform} shows that any such $(X,\tilde{d})$ does not admit a coarse embedding into a Hilbert space. Hence, it suffices to construct such an $(X, \tilde{d})$ and an isometric embedding of it into $\DGM{\oo}$, since a coarse embedding of $\DGM{\oo}$ into a Hilbert space would restrict to a coarse embedding of $(X, \tilde{d})$ into a Hilbert space. 
	
	Choose an enumeration $\{(n_i, m_i)\}_{i=1}^{\oo}$ of $\mathbb{N} \times \mathbb{N}$ such that $i < j$ implies $n_i + m_i \leq n_j + m_j$, for instance, $(1,1), (1,2), (2,1), (1,3), (2, 2), (3, 1), (1,4)$, etc. Define $c_1 = 1$ and for $i \geq 2$, $c_i = 4\max(c_{i-1}, n_i + m_i)$. For every $(n_i, m_i)$, note that $c_i > n_i > \max \{d_{n_i, m_i}(x,y)\ |\ x,y \in \Z_{n_i}^{m_i} \}$. So by Theorem \ref{thm:finitems}, there exists an isometry $\PHI_i : \Z_{n_i}^{m_i} \to \DGM{\oo}$ such that $\PHI_i(\Z_{n_i}^{m_i}) \subseteq B(\emptyset, \frac{3c_i}{2}) \setminus B(\emptyset, c_i)$. 
	
	Define $\PHI: X \to \text{Dgm}_\oo$ by $\PHI(x) = \PHI_i(x)$ for $x \in \Z_{n_i}^{m_i}$ and define $\tilde{d}(x,y) = w_\oo(\PHI(x), \PHI(y))$ for any $x, y \in X$. By the definition of $\tilde{d}$, $\PHI: (X, \tilde{d}) \to \DGM{\oo}$ is an isometry. If $x, y \in \Z_{n_i}^{m_i}$, then $\tilde{d}(x,y) = w_\oo(\PHI_i(x), \PHI_i(y)) = d_{n_i, m_i}(x,y)$ so $\tilde{d}$ satisfies $(1)$ above. It only remains to show $\tilde{d}$ satisfies $(2)$.
	
	Suppose $x \in \Z_{n_i}^{m_i}$, $y \in \Z_{n_j}^{m_j}$, and $(n_i, m_i) \neq (n_j, m_j)$. We may assume $i < j$. By construction, $\PHI(x) = \PHI_i(x) \in B(\emptyset, \frac{3c_i}{2}) \setminus B(\emptyset, c_i)$ and $\PHI(y) = \PHI_j(y) \in B(\emptyset, \frac{3c_j}{2}) \setminus B(\emptyset, c_j)$, which implies by the triangle inequality for $w_{\oo}$ that
	\[
	\tilde{d}(x,y) = w_\oo(\PHI(x), \PHI(y)) \geq w_\oo(\PHI(y), \emptyset) - w_\oo(\PHI(x), \emptyset) > c_j - \frac{3c_i}{2}.
	\]
	Additionally, we have $n_i + m_i \leq n_j + m_j$ and $c_j  \geq 4\max(c_i, n_j + m_j) \geq 2(c_i + (n_j+m_j))$, so
	\[
	\tilde{d}(x,y) > c_j - \frac{3c_i}{2} \geq 2(n_j+m_j) + 2c_i - \frac{3c_i}{2} > n_i + m_i + n_j + m_j.
	\]
	We have shown that $\tilde{d}$ satisfies $(2)$ which completes the proof.
\end{proof}

\begin{remark} \label{rem:finite}
  For a finite metric space, the isometric embedding defined in Theorem \ref{thm:finitems} sends each point to a persistence diagram of finite cardinality in $\DGM{\oo}$. In particular, the map $\PHI_i: \Z^{m_i}_{n_i} \to \DGM{\oo}$ given in the proof of Theorem \ref{thm:finitems} has an image consisting of finite persistence diagrams. Since $X$ is the disjoint union of $\Z^m_n$ for every $n,m \geq 1$, it follows that $\PHI: (X, \tilde{d}) \to \DGM{\oo}$ sends each point in the metric space $X$ to a finite persistence diagram. Hence, the proof of Theorem \ref{thm:embeddingdb} gives the slightly stronger result that the space of finite persistence diagrams with the bottleneck distance does not admit a coarse embedding into a Hilbert space.
\end{remark}

Theorem \ref{thm:embeddingdb} and Remark~\ref{rem:finite} give the impossibility of coarsely embedding the space of finite persistence diagrams with the bottleneck distance into a Hilbert space. The primary motivation for this result was the application of kernel methods to persistent homology. In computational settings, the persistence diagrams of interest are frequently the result of applying homology to a filtered finite simplicial complex. We refer the interested reader to \citet{oudot:book}. Hence, one may ask whether this more restricted space of persistence diagrams, i.e. the subspace arising from homology of filtered finite simplicial complexes, admits a coarse embedding into a Hilbert space. Unfortunately, this is easily seen to be false by the following.

\begin{lemma} \label{lem:finite}
	Every finite persistence diagram is realizable as the persistent homology of a filtered finite simplicial complex.
\end{lemma}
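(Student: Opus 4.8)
The plan is to realize a finite diagram $D\colon I \to \HP$ as the degree-one persistent homology of a disjoint union of elementary ``filled triangle'' gadgets, one gadget per point of the diagram. Writing $I = \{1,\dots,n\}$ and $D(i) = (b_i, d_i)$ with $b_i < d_i$, I would, for each $i$, introduce three fresh vertices $v_i^0, v_i^1, v_i^2$ together with the three edges joining them and the single $2$-simplex $\sigma_i$ that they bound. I assign filtration value $b_i$ to each of these three vertices and three edges (so that the boundary circle $\partial\sigma_i$ appears at time $b_i$), and filtration value $d_i$ to the triangle $\sigma_i$ itself. Let $K$ be the disjoint union of these $n$ gadgets on pairwise disjoint vertex sets, equipped with the filtration function just described. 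This is a finite simplicial complex, and the assignment is a legitimate filtration because the face condition (a face is no later than its cofaces) reduces, within each gadget, to the single inequality $b_i \le d_i$, which holds since $b_i < d_i$.

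Next I would compute the degree-one persistence of each gadget. In gadget $i$, the sublevel complex is empty for $t < b_i$, equals the boundary circle $\partial\sigma_i$ for $b_i \le t < d_i$, and equals the full, contractible, closed $2$-simplex $\sigma_i$ for $t \ge d_i$. Hence its degree-one homology is trivial except on the half-open interval $[b_i, d_i)$, where it has rank one: a single $1$-cycle is born exactly at $b_i$, when $\partial\sigma_i$ first appears, and dies exactly at $d_i$, when $\sigma_i$ fills it. This gadget therefore contributes precisely the point $(b_i, d_i)$, and nothing else, to the degree-one diagram.

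Because the gadgets sit on disjoint vertex sets, at every filtration value $t$ the sublevel complex $K_t$ is the disjoint union of the individual sublevel complexes, so $H_1(K_t) \cong \bigoplus_{i} H_1(\text{gadget } i \text{ at } t)$ naturally in $t$. The degree-one persistence module of $K$ is thus the direct sum of the gadget modules, and its barcode is the multiset $\{[b_i, d_i)\}_{i \in I}$; equivalently, its persistence diagram is exactly $D$. Coincidences among the pairs $(b_i, d_i)$, and hence multiplicities in $D$, are handled automatically, since each point is carried by its own gadget. I emphasize that no essential (infinite) $1$-cycle is ever created, which is precisely why I work in degree one rather than degree zero: a disjoint union produces many infinite degree-zero bars, whereas here every degree-one class is eventually filled, matching the requirement that the coordinates of $D$ lie in $\HP$ and in particular have finite death time.

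I do not expect a genuine obstacle here; the construction is elementary and the only points requiring care are the two verifications above, namely that a single filled-triangle gadget has exactly one transient $1$-cycle born at $b_i$ and dying at $d_i$, and that the disjointness of the gadgets turns the homology into a direct sum, so that no spurious or infinite degree-one bars appear in the total diagram.
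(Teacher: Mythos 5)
Your construction is exactly the paper's: one filled-triangle gadget per diagram point, with the $1$-skeleton entering at the birth value and the $2$-simplex at the death value, taking degree-one homology of the disjoint union. The proposal is correct and matches the paper's proof, with somewhat more detail on the verification steps.
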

      
\begin{proof}
  Suppose $D: \{1, \dots, n \} \to \HP$ is a persistence diagram and define $(x_i, y_i) = D(i)$. Let $V$ be the set $\{a_i, b_i, c_i\}_{i = 1}^n$. Consider the simplicial complex on $V$ that is the disjoint union of $n$ $2$-simplices and has the filtration given by assigning the value $x_i$ to
  $\{a_i\}, \{b_i\}, \{c_i\}, \{a_i, b_i\}, \{a_i, c_i\}, \{b_i, c_i\}$ and the value $y_i$ to 
  $\{a_i, b_i, c_i\}$. 
  Applying the simplicial homology functor $H_1(-, \Z_2)$ recovers the persistence diagram $D$. To see this, note that the persistent homology of the filtration on $V$ is the direct sum of the persistent homology of the filtration on each individual triangle since the triangles are mutually disjoint. For triangle $i$, the $1$-skeleton appears at time $x_i$ giving rise to one degree-$1$ homology generator that vanishes when the $2$-simplex is added at time $y_i$. 
\end{proof}

Property A is a concrete condition satisfiable by a discrete metric space that implies it can be coarsely embedded into a Hilbert space. In the following proposition, we show that a semi-metric space being of $q$-negative type for some positive $q$ is similarly a concrete condition that implies coarse embeddability into a Hilbert space.

\begin{proposition}
\label{prop:posnegtype}
 A semi-metric space $(X,d)$ of $q$-negative type for some $q > 0$ admits a coarse embedding into a Hilbert space.	
\end{proposition}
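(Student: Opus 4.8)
The plan is to reduce the statement to the isometric case already characterized by Theorem~\ref{thm:negtypeiso}, using a power-rescaling of the semi-metric. The central observation is that $q$-negative type of $(X,d)$ is \emph{exactly} $2$-negative type of the rescaled semi-metric $\rho := d^{q/2}$. Indeed, for any $x_1, \dots, x_n \in X$ and $a_1, \dots, a_n \in \R$ with $\sum_i a_i = 0$, one has $\sum_{i,j} a_i a_j\, \rho(x_i, x_j)^2 = \sum_{i,j} a_i a_j\, d(x_i, x_j)^q$, and the right-hand side is $\leq 0$ precisely because $(X,d)$ is of $q$-negative type by hypothesis. Since $q > 0$, the function $\rho = d^{q/2}$ still satisfies $\rho(x,x) = 0$ and $\rho(x,y) = \rho(y,x)$, so $(X, \rho)$ is a genuine semi-metric space; crucially, the definition of a semi-metric requires no triangle inequality, so raising $d$ to the power $q/2$ is unproblematic.

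First I would therefore apply Theorem~\ref{thm:negtypeiso} to $(X, \rho)$, which is of $2$-negative type by the identity above, to obtain a Hilbert space $\mathcal{H}$ and a map $\PHI: X \to \mathcal{H}$ with $\|\PHI(x) - \PHI(y)\|_{\mathcal{H}} = \rho(x,y) = d(x,y)^{q/2}$ for all $x, y \in X$. Next I would verify that this same $\PHI$, now regarded as a map $(X,d) \to \mathcal{H}$, is a coarse embedding. Taking $\rho_-(t) = \rho_+(t) = t^{q/2}$, both functions map $[0,\oo)$ into $[0,\oo)$, are non-decreasing since $q/2 > 0$, and sandwich the distances exactly: $\rho_-(d(x,y)) = d(x,y)^{q/2} = \|\PHI(x) - \PHI(y)\|_{\mathcal{H}} = \rho_+(d(x,y))$, giving condition~(1). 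For condition~(2), $\lim_{t \to \oo} \rho_-(t) = \lim_{t \to \oo} t^{q/2} = \oo$, again because $q > 0$.

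There is no serious obstacle here; the proof is essentially a single structural observation followed by bookkeeping. The only point requiring care is that the argument genuinely lives in the semi-metric framework: $d^{q/2}$ need not obey the triangle inequality, but Theorem~\ref{thm:negtypeiso} is stated for semi-metric spaces, so this costs nothing. The essential content is simply that strictly positive negative type survives the substitution $d \mapsto d^{q/2}$, after which both coarse-embedding control functions may be taken to be the single power function $t^{q/2}$, whose divergence to infinity is exactly what the strict positivity of $q$ guarantees.
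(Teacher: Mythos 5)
Your proof is correct and follows essentially the same route as the paper's: rescale the semi-metric to $d^{q/2}$, observe that $q$-negative type of $(X,d)$ is precisely $2$-negative type of the rescaled space, apply Theorem~\ref{thm:negtypeiso} to get an isometric embedding, and take both control functions to be $t \mapsto t^{q/2}$. Your added remarks---that no triangle inequality is needed since Theorem~\ref{thm:negtypeiso} operates on semi-metric spaces, and that condition~(2) hinges on $q > 0$---make the same points the paper's proof relies on implicitly.
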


\begin{proof}
Suppose there exists a $q > 0$ such that the metric space $(X,d)$ has $q$-negative type. Define $f(t) = t^{q/2}$ and observe that $fd(x,x) = 0^{q/2} = 0$ and $fd(x,y) = fd(y,x)$ so $(X,fd)$ is a semi-metric space. Let $x_1, \dots, x_n \in X$ and $a_1, \dots, a_n \in \R$ such that $\sum_{i=1}^n a_i = 0$. Then
\[
	\sum_{i,j =1}^{n} a_ia_j(fd(x_i,x_j))^2 =  \sum_{i,j =1}^{n} a_ia_jd(x_i,x_j)^q\leq 0,
\]
so $(X,fd)$ is a semi-metric space of $2$-negative type. By Theorem \ref{thm:negtypeiso}, there exists an isometric embedding $\PHI$ from $(X,fd)$ into a Hilbert space $\mathcal{H}$. Define $\rho_+ = \rho_- = f$. It follows that $\PHI$ satisfies the requirements of a coarse embedding of $(X,d)$ into $\mathcal{H}$, i.e.
\[
\rho_+(d(x,y)) = \rho_-(d(x,y)) = fd(x,y) = \| \PHI(x) - \PHI(y) \|_{\mathcal{H}}. \qedhere 
\]
\end{proof}

\begin{remark} \label{rem:negative-type}
Since $\DGM{\oo}$ does not admit a coarse embedding into a Hilbert space, Proposition \ref{prop:posnegtype} implies that $\DGM{\oo}$ is of $0$-negative type. This also follows from Corollary \ref{cor:grbottleneck} and the result of \citet{lennard1997generalized} that a space has generalized roundness $q$ iff it has $q$-negative type. Finally, we state two corollaries of Theorem \ref{thm:embeddingdb} that answer Questions 3.10 and 3.11 of \citet{2019arXiv190202288B}.
\end{remark}

\begin{corollary} \label{cor:propertya}
	$\DGM{\oo}$ contains a discrete subspace that fails to have property A.
\end{corollary}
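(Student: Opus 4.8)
The plan is to recycle the discrete metric space $(X, \tilde{d})$ already built in the proof of Theorem~\ref{thm:embeddingdb} rather than to construct anything new. Recall that $X$ is the disjoint union of the finite spaces $\Z_n^m$, equipped with the metric $\tilde{d}$ realized as $w_\oo(\PHI(\cdot), \PHI(\cdot))$ via the isometric embedding $\PHI: (X, \tilde{d}) \to \DGM{\oo}$. The image $\PHI(X) \subseteq \DGM{\oo}$ is then an isometric copy of $(X, \tilde{d})$, and I claim it is the desired subspace.

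First I would check that $\PHI(X)$ is a genuine discrete (indeed uniformly discrete) metric space. Within a single component $\Z_n^m$, any two distinct points differ in some coordinate, so $d_{n,m}$, being the maximum of coordinatewise $d_n$-distances, is at least $1$; points lying in distinct components $\Z_{n}^{m}$ and $\Z_{n'}^{m'}$ are separated by $\tilde{d} \geq n+m+n'+m' \geq 4$ by property~(2) of $\tilde{d}$. Hence the infimum of distances between distinct points of $X$ is positive, so $(X, \tilde{d})$ and its isometric image $\PHI(X)$ are uniformly discrete.

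Next I would invoke the fact, established in the proof of Theorem~\ref{thm:embeddingdb} via Proposition~6.3 of \citet{dranishnikov2002uniform}, that $(X, \tilde{d})$ does not admit a coarse embedding into any Hilbert space. Since $\PHI$ is an isometry and coarse embeddability is an isometric (indeed coarse) invariant, the subspace $\PHI(X) \subseteq \DGM{\oo}$ inherits this failure: it admits no coarse embedding into a Hilbert space. Finally I would apply the contrapositive of Theorem~\ref{thm:yu}: a discrete metric space with property A admits a coarse embedding into a Hilbert space, so a discrete metric space admitting no such embedding cannot have property A. Therefore $\PHI(X)$ is a discrete subspace of $\DGM{\oo}$ that fails to have property A, which is exactly the assertion.

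There is essentially no serious obstacle here; the argument is a short deduction from Theorem~\ref{thm:embeddingdb} and Theorem~\ref{thm:yu}. The only points requiring care are the two routine verifications above, namely that $(X, \tilde{d})$ is uniformly discrete and that property A (together with the failure of coarse embeddability) transfers across the isometry $\PHI$; both follow immediately from the definitions.
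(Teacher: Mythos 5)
Your proposal is correct and is essentially identical to the paper's own proof: both reuse the discrete space $(X,\tilde{d})$ and its isometric image $\PHI(X)$ from the proof of Theorem~\ref{thm:embeddingdb}, invoke the Dranishnikov et al.\ non-embeddability result, and apply the contrapositive of Theorem~\ref{thm:yu}. Your explicit verification that $(X,\tilde{d})$ is uniformly discrete is a detail the paper leaves implicit, but it does not change the argument.
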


\begin{proof}
In the proof of Theorem \ref{thm:embeddingdb}, we consider a discrete metric space $(X, \tilde{d})$ and prove it embeds in $\DGM{\oo}$ via an isometric embedding $\PHI$. \citet{dranishnikov2002uniform} have shown that $(X, \tilde{d})$ does not admit a coarse embedding into a Hilbert space so by Theorem \ref{thm:yu}, $\PHI(X)$ fails to have property A.
\end{proof}

\begin{corollary} \label{cor:asymptotic-dimension}
	$\DGM{\oo}$ has infinite asymptotic dimension.
\end{corollary}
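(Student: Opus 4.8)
The plan is to derive this immediately from Theorem~\ref{thm:embeddingdb} together with Theorem~\ref{thm:roe} by a contrapositive argument. The key observation is that Theorem~\ref{thm:roe} already supplies the implication ``finite asymptotic dimension $\implies$ coarse embeddability into a Hilbert space,'' while Theorem~\ref{thm:embeddingdb} asserts precisely that $\DGM{\oo}$ admits no such coarse embedding. Chaining these two facts is all that is required.

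Concretely, first I would suppose for contradiction that $\DGM{\oo}$ has finite asymptotic dimension, say asymptotic dimension $\leq n$ for some non-negative integer $n$. Then by Theorem~\ref{thm:roe}, applied with $X = \DGM{\oo}$, there exists a coarse embedding of $\DGM{\oo}$ into a Hilbert space. This directly contradicts Theorem~\ref{thm:embeddingdb}, which states that no coarse embedding of $\DGM{\oo}$ into any Hilbert space exists. Hence the assumption of finite asymptotic dimension is untenable, and $\DGM{\oo}$ must have infinite asymptotic dimension.

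I expect there to be no genuine obstacle here: the substantive work has already been carried out in establishing Theorem~\ref{thm:embeddingdb} (via the isometric embedding of Theorem~\ref{thm:finitems} and the Dranishnikov--Enflo construction), and this corollary is a formal consequence. The only point worth checking is that Theorem~\ref{thm:roe} is stated for an arbitrary metric space with finite asymptotic dimension, so that it applies to $\DGM{\oo}$ without any additional hypotheses such as bounded geometry or local finiteness; since the cited form (\citealt[Example 11.5]{roe2003lectures}) imposes no such restriction, the argument goes through verbatim. Thus the proof reduces to the single implication described above, with no routine calculation to grind through.
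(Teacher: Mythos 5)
Your proof is correct and is exactly the paper's argument: assume finite asymptotic dimension, invoke Theorem~\ref{thm:roe} to obtain a coarse embedding into a Hilbert space, and contradict Theorem~\ref{thm:embeddingdb}. Your additional check that the cited form of Theorem~\ref{thm:roe} requires no bounded-geometry hypothesis is a sensible sanity check but not needed beyond what the paper does.
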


\begin{proof}
If $\DGM{\oo}$ had finite asymptotic dimension, then it would admit a coarse embedding into a Hilbert space by Theorem \ref{thm:roe}, which contradicts Theorem \ref{thm:embeddingdb}.
\end{proof}

\end{document}